\begin{document}
\pagestyle{headings}
\mainmatter
\def\ECCVSubNumber{8003}  

\title{Kernel Relative-prototype Spectral Filtering for Few-shot Learning} 


\titlerunning{Kernel Relative-prototype Spectral Filtering for Few-shot Learning}
%
\author{Tao Zhang\inst{1}\orcidID{0000-0002-0281-9234} \and
         Wu Huang\inst{2}\orcidID{0000-0002-2525-6454}}
\authorrunning{T. Zhang et al.}
%
\institute{Chengdu Techman Software Co., Ltd., Chengdu,
Sichuan, China
\email{ztuestc@outlook.com}\\ \and
Sichuan University, Chengdu, Sichuan, China\\
\email{huangwu@scu.edu.cn}}
\maketitle

\begin{abstract}
	Few-shot learning performs classification tasks and regression tasks on scarce samples. As one of the most representative few-shot learning models, Prototypical Network represents each class as sample average, or a prototype, and measures the similarity of samples and prototypes by Euclidean distance. In this paper, we propose a framework of spectral filtering (shrinkage) for measuring the difference between query samples and prototypes, or namely the relative prototypes, in a reproducing kernel Hilbert space (RKHS). In this framework, we further propose a method utilizing Tikhonov regularization as the filter function for few-shot classification. We conduct several experiments to verify our method utilizing different kernels based on the \emph{mini}ImageNet dataset, \emph{tiered}-ImageNet dataset and CIFAR-FS dataset. The experimental results show that the proposed model can perform the state-of-the-art. In addition, the experimental results show that the proposed shrinkage method can boost the performance. Source code is available at \url{https://github.com/zhangtao2022/DSFN}.
	\keywords{Few-shot learning, Relative-Prototype, Spectral Filtering, Shrinkage, Kernel}
\end{abstract}

\section{Introduction}

Humans have an innate ability to quickly learn from one or several labeled pictures and infer the category of new pictures.
In contrast, deep learning, despite its breakthrough success in computer vision, still needs huge data to drive it. This shortcoming seriously hinders its applications in some practical situations where data is scanty. Therefore, it is an important and challenging problem for machines to acquire the human-like ability to make inferences about unknown samples based on too few samples. Inspired by this ability of humans, few-shot learning is proposed and has become a hot spot \cite{feifei2006oneshot,vanschoren2018meta-learning,2015Human}. Vinyals et al. \cite{vinyals2016matching} proposed a training paradigm that few-shot learning models should learn new categories of unseen examples from query set using very few examples from support set. Meta-learning methods can be well applied to the few-shot settings that need to complete the task that contains both support set and query set.

\begin{figure}[h]
	\centering
	{\includegraphics[width=0.9\textwidth]{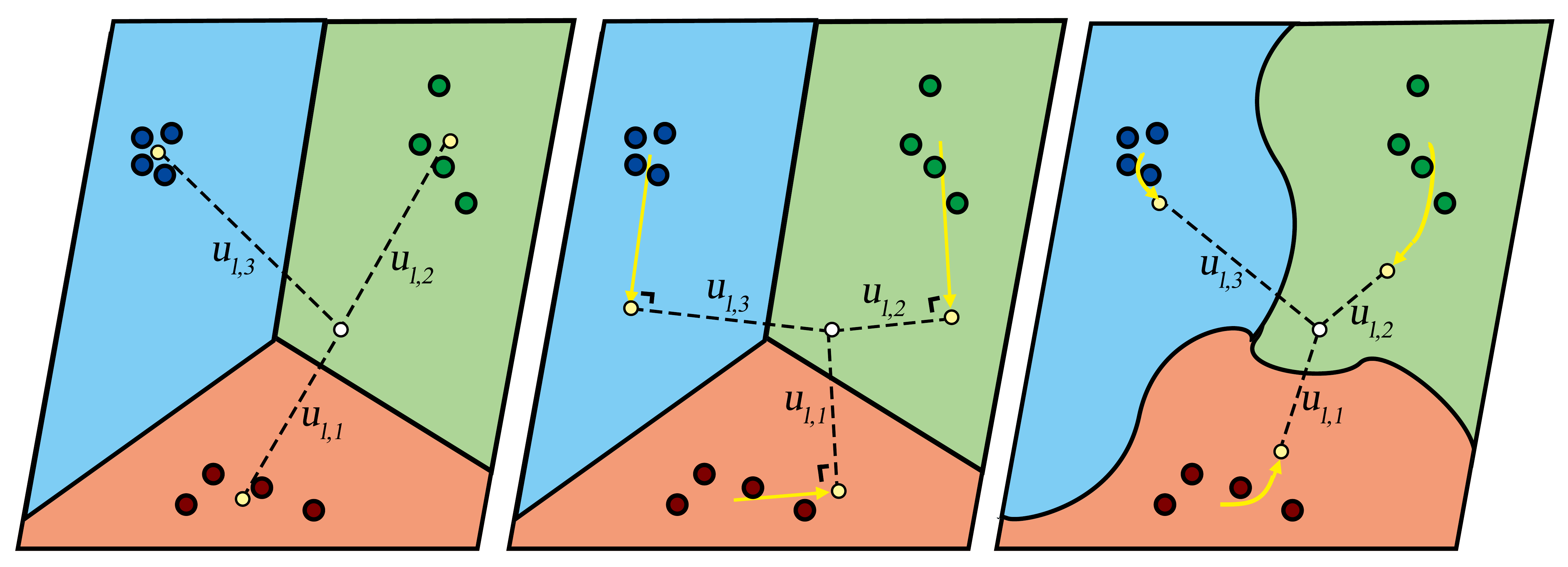}}
	\caption[]{Comparison of the relative-prototypes (dotted line) in the Prototypical Networks (ProtoNet)\cite{snell2017prototypical}, Deep Subspace Networks (DSN)\cite{simon2020adaptive} and the proposed DSFN. \textbf{Left}: ProtoNet in Euclidean space; \textbf{Middle}: DSN in Euclidean space; \textbf{Right}: DSFN in a reproducing kernel Hilbert space. For them, $u_{l,1}$,$u_{l,2}$ and $u_{l,3}$ are the relative-prototypes with class 1(brown), class 2(green) and class 3(blue), respectively.}
	\label{Proposed}
\end{figure}

Recently, a series of meta-learning models for few-shot setting have been proposed, which can be divided into the metric-based models and the optimization-based models \cite{snell2017prototypical,finn2017model-agnostic,2018Learning,munkhdalai2017meta,lee2019meta-learning,mishra2018a,andrychowicz2016learning,ravi2016optimization,koch2015siamese,zhang2018zero,gidaris2018dynamic,mangla2020charting,rodriguez2020embedding,ziko2020laplacian}. Prototypical Networks  (ProtoNet), as one of the metric-based representatives, proposes to use prototypes to represent each category, and to measure the similarity between a sample and a prototype by using Euclidean distance \cite{snell2017prototypical}. Based on this idea, some prototype-related models have been proposed in recent years \cite{chen2020variational,oreshkin2018tadam:,fort2017gaussian,pahde2021multimodal}.

The prototype in ProtoNet is estimated by support sample mean in each class, which may deviate from the true prototype \cite{liu2020prototype}. In \cite{liu2020prototype},  Two bias elimination mechanisms are proposed to eliminate the difference between the prototype estimated value and the true value. In addition, a method of using a combination of semantic information and visual information to better estimate the prototype was proposed, and a significant improvement was obtained  \cite{xing2019adaptive}. 

In measurement of sample similarity, Mahalanobis distance would be better than Euclidean distance  to capture the information from the distribution within the class\cite{bateni2020improved}. In addition, similarity measurement in hyperbolic space that learns the features of a hierarchical structure could be better than those in Euclidean space for few-shot image classification \cite{fang2021kernel,khrulkov2020hyperbolic}.

In this paper, we propose a method called Deep Spectral Filtering Networks (DSFN) aiming to better estimate relative prototypes, or the difference between prototypes and query samples (Fig. \ref{Proposed}). Euclidean distance can not fully capture the information of intra-class difference as it is applied to measure the difference between sample and prototype. Thus, the main components of the inner class distance are taken into account in the similarity assessment between the sample and the prototype, which to some extent interferes with this assessment. In our approach, the influence of these components is weakened via spectral filtering. It is similar to the kernel mean shrinkage estimation that aims to reduce the expected risk of mean estimation \cite{muandet2016kernel1,muandet2014kernel1,muandet2014kernel,muandet2016kernel}, but the estimated relative prototypes don't have to be close to the true mean. 

\begin{figure}[h]
	\centering
	{\includegraphics[width=0.95\textwidth]{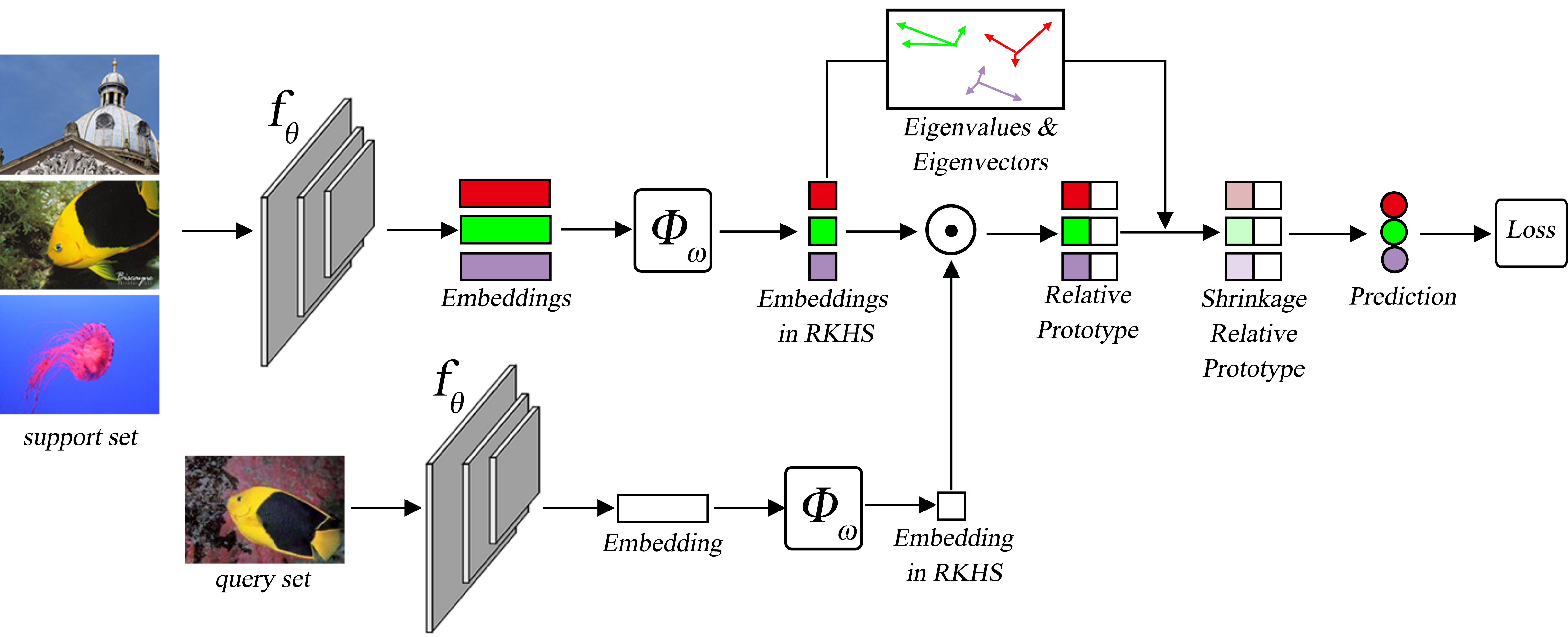}}
	\caption[]{The overview of our proposed approach. The features of support set and query set extracted from $f_\theta$ are mapped into RKHS by the function $\phi_\omega$. The relative prototypes are shrunk based on the eigenvalues and eigenvectors from the support set.}
	\label{overview}
\end{figure}

Recent work has shown that kernel embedding can significantly improve the performance of few-shot learning \cite{fang2021kernel}. Based on kernel mean embedding theory, our approach can measure the relative prototypes in a reproducing kernel Hilbert space (RKHS). The advantage of this is that we can deal with the projection problem of feature space by means of the kernel embedding. The overview of our proposed approach is shown in Fig. \ref{overview}.

The contributions of our work are summarized as four folds:
\begin{itemize}
	\item[1)] To our knowledge, this work is the first to estimate relative-prototypes using a kernel shrinkage method for few-shot learning.
	\item[2)] We propose to estimate relative prototypes instead of prototypes, aiming to better capture the probability information that the query samples belongs to a class.
	\item[3)] We propose a framework of spectral filtering to estimate the relative prototypes. This approach can filter the interference within cluster variation while measuring the relative prototypes and boost the performance.
	\item[4)] We introduce kernel embedding for measuring the relative prototypes via spectral filtering in RKHS, which allows us to apply the kernels that achieve the state-of-the-art performance.
\end{itemize}

\section{Related Work}
\subsection{Metric-based Few-shot Learning}
Recently, some metric-based meta-learning algorithms for few-shot setting, one-shot setting and zero-shot setting were proposed, and the representatives of metric-based meta learning are  Matching Networks (MatchingNet) \cite{vinyals2016matching} and ProtoNet \cite{snell2017prototypical}. MatchingNet proposed an attention mechanism for rapid learning. In addition, it pointed out that in the training procedure, the test condition should match with the train condition. ProtoNet proposed a simple strategy to improve the performance of MatchingNet. The strategy is using prototype, the mean of support samples in each class, to represent its support set\cite{snell2017prototypical}. From the perspective of classifier design, some works proposed a different kind of classifier with MatchingNet and ProtoNet. Simon et al. proposed the dynamic classifiers by using subspaces for few-shot learning called Deep Subspace Networks (DSN)\cite{simon2020adaptive}. These classifiers are defined as the projection of the difference between a prototype and a query sample onto a subspace. Lee et al. \cite{lee2019meta-learning} proposed a linear support vector machine instead of nearest neighbor classifiers for few-shot learning.

Our work is related to the rectification of the prototype. For ProtoNet, the prototype is simply calculated by averaging the support sample values. Some kinds of work show that, for improving the classification performance, the prototype need to be better estimated. For example, compared with the simple visual information, the combination of cross-modal information and visual information can better represent the prototype \cite{xing2019adaptive,pahde2021multimodal}. In addition, a method of transductive setting has been proposed to rectify prototype, which diminishes the intra-class bias via label propagation and diminishes the cross-class bias via feature shifting \cite{liu2020prototype}.  Gao et al. used a combination of the instance-level attention and the feature-level attention for the noisy few-shot relation classification \cite{gao2019hybrid}. In this method, prototype is a weighted mean of support samples after conducting by the instance-level attention.

Our work is also related to the similarity measurement of two samples. Similarity measurement between samples in ProtoNet is carried out by using Euclidean distance. Euclidean distance implies two assumptions, that is, the characteristic dimensions are not correlated and have consistent covariance. However, in the real world these two assumptions are not necessarily true.  Boris N. Oreshkin et al. found that, by using the metric scaling, the performance of few-shot algorithms can be improved by optimizing the metric scaling, and showed that the performance using scaled cosine distance nears that using Euclidean distance\cite{oreshkin2018tadam:}. The metric scaling can also be learned from a Bayesian method perspective.\cite{chen2020variational}. Sung et al. proposed the Relation Networks (Relation Nets) that learns to learn a transferrable deep metric\cite{2018Learning}. In addition, Mahalanobis distance is proposed to overcome the defect of Euclidean distance on measuring the similarity between two samples \cite{bateni2020improved,fort2017gaussian}. Recently, Khrulkov et al. studied how to measure sample similarity in a hyperbolic space instead of Euclidean space. They found that, compared with Euclidean space,  hyperbolic embeddings can benefit the embedding of images and provide a better performance \cite{khrulkov2020hyperbolic}. However, it is more difficult to operate in hyperbolic space, such as sample averaging, which further hinders the use of hyperbolic geometry. Fang et al. \cite{fang2021kernel} provides several positive definite kernel functions of hyperbolic spaces, which enable one to operate in hyperbolic spaces. 

\subsection{Kernel Mean Shrinkage Estimation}
The method of kernel mean shrinkage estimation is relevant to our work, which is employed to estimate the relative prototype. In recent years, some work has been done on the kernel mean shrinkage estimation. Muandet  et al. \cite{muandet2014kernel1} pointed out that the estimation of an empirical average in RKHS can be improved by employing Stein effect. Like James-Stein estimator, they propose several kernel mean shrinkage estimators, e. g.,  empirical-bound kernel mean shrinkage estimator (B-KMSE) and spectral kernel mean shrinkage estimator (S-KMSE) \cite{muandet2016kernel}.  B-KMSE is easily optimized but the degree of shrinkage is the same for all coordinates.  S-KMSE can shrink differently on each coordinate on the basis of eigenspectrum of covariance matrix, but it is difficult to optimize. Furthermore, ``Shrinkage'' can be achieved by a generalized filter function that can be embodied by various forms, such as Truncated SVD \cite{muandet2014kernel}. For these estimators,  the shrinkage parameters are commonly obtained by a cross-validation score.

\section{Methodology}

\subsection{Preliminary}
In metric-based few-shot learning, before measuring the difference between query and classes, one often firstly represent each class by using support samples belonging to them, respectively. Here we consider two different geometric types of the class representation.

\textbf{Point type.} In this type, a class is often represented as a point, e.g., a prototype, which is calculated by averaging of support samples in each class \cite{snell2017prototypical}. In addition, the point can also be calculated by summing over the support samples\cite{2018Learning}. In these cases, The probability of measuring a sample belonging to a class is usually based on the distance between two points. For example, for ProtoNets, the distance square is expressed as
\begin{equation}\label{ProtoNet_m}
d_{l,c}^2 = \Vert f_\theta\left(\bm{q}_l\right) - \bm{\mu}_{c}\Vert^2,\,\,\, {\rm{with}}\,\,\,\bm{\mu}_{c} = \frac{1}{n} \sum_{i=1}^{n} f_\theta\left(\textbf{s}_{i,c}\right),\\
\end{equation}
where $\bm{q}_l$ is the $l$th sample in query set, and  $\bm{s}_{i,c}$ is the $i$th sample in the $c$th class of the support set. In addition, $f_\theta(\cdot)$ with the parameter $\theta$ is a network.

\textbf{Subspace type.} In addition to the point type, a class can also be represented as a subspace, e. g., the subspace spanned by the the feature vectors created from the support samples in a class \cite{simon2020adaptive}. In this case, a sample should be close to its own class subspace and stay away from the subspaces of other classes. For example, for DSN, the distance square is expressed as

\begin{equation}\label{DSN_m}
d_{l,c}^2 = \Vert (\bm{I} - \bm{P}_{c}\bm{P}_{c}^T)(f_\theta\left(\bm{q}_l\right) - \bm{\mu}_{c})\Vert^2,\,\,\, {\rm{with}}\,\,\,\bm{\mu}_{c} = \frac{1}{n} \sum_{i=1}^{n} f_\theta\left(\textbf{s}_{i,c}\right),\\
\end{equation}
where $P_c$ the truncated matrix of $W_c$ the eigenvector matrix of empirical covariance matrix of support set. For a SVM classifier in few-shot learning, the representation of each class is one or more subspaces whose boundaries are so-called hyperplanes, which are determined by the support vectors \cite{lee2019meta-learning}. In this case, a sample should belong to its own class subspace and stay away from the hyperplanes.

In the following, we show that the class representations of the two types are not entirely different. For example, we demonstrate that our framework can be embodied as either of ProtoNet and DSN with different filter functions (see the Section 3.4).

\subsection{Kernel Shrinkage Relative-prototype}
Here we propose the definition of kernel shrinkage relative-prototype. Given the $C$-way $n$-shot support set $\bm{S}=\{\bm{S}_{1},\bm{S}_{2},...,\bm{S}_{C}\}$ with $\bm{S}_c=\{\bm{s}_{1,c},\bm{s}_{2,c},...,\bm{s}_{n,c}\}$, and the query set $\bm{Q}=\{\bm{q}_1,\bm{q}_2,...,\bm{q}_m\}$. We firstly use function $\phi_{\omega}$ to map the observations of support samples of class $c$ in feature space, and calculate the prototype as:

\begin{equation}\label{RProto}
\begin{split}
\bm{\mu}_{c} = \frac{1}{n} \sum_{i=1}^{n} \phi_{\omega}\left(f_\theta\left(\textbf{s}_{i,c}\right)\right),
\end{split}
\end{equation}
where $\phi_{\omega}(\cdot)$ with the parameter $\omega$ is a mapping function, and $f_\theta(\cdot)$ with the parameter $\theta$ is a network. Similar to the concepts in \cite{ye2020few,kang2021relational}, we propose the relative-prototype, the difference between a sample in the query set and a prototype of class as a mean form:

\begin{equation}\label{hatss}
\begin{split}
\bm{\mu}_{l,c} &= \phi_{\omega}\left(f_\theta\left(\bm{q}_l\right)\right) - \bm{\mu}_{c} = \frac{1}{n} \sum_{i=1}^{n}\bm{m}_{l,i,c},
\end{split}
\end{equation}
where

\begin{equation}\label{hatqs}
\bm{m}_{l,i,c} = \phi_{\omega} \left(f_\theta\left(\bm{q}_l\right)\right) - \phi_{\omega}\left(f_\theta\left(\bm{s}_{i,c}\right)\right).
\end{equation}
Simon et al. proposed a method that using adaptive subspaces instead of prototypes for few-shot learning \cite{simon2020adaptive}. In their work, the sample similarity is measured via a distance between a query sample and a subspace created from a support set, which can be seen as a ``shrinkage'' of the distance between a query sample and a prototype. Enlightened by it, we apply the shrinkage estimation theory to extend their idea and measure the sample similarity in a RKHS, or express the kernel shrinkage estimation of $\bm{\mu}_{l,c}$ as 

\begin{equation}\label{SRProto}
\begin{split}
\bm{\mu}_{l,c}(\lambda_c) = \bm{\mu}_{l,c} - \sum_{i=1}^{n} h(\gamma_{i,c}, \lambda_{l,c})\gamma_{i,c} \langle \bm{\mu}_{l,c}, \bm{w}_{i,c} \rangle \bm{w}_{i,c}
\end{split},
\end{equation}
where  $\lambda_{l,c}$ is the shrinkage coefficient with the class $c$, $(\gamma_{i,c}, \bm{w}_{i,c})$ are respectively the eigenvalues and eigenvectors of the empirical covariance matrix $C = \sum_{i=1}^{n}\bm{r}_{i,c} \otimes \bm{r}_{i,c}$ where 

\begin{equation}\label{r}
\bm{r}_{i,c} = \phi_{\omega}\left(f_\theta\left(\bm{s}_{i,c}\right)\right) - \bm{\mu}_{c}.
\end{equation}
In Eq. \ref{SRProto}, $h(\gamma_{i,c}, \lambda_{l,c})$ is a shrinkage function that approaches $1 / \gamma_{i,c}$ as $\lambda_{l,c}$ decreases to 0, implying that the relative prototype is fully shrunk;  As $\lambda_{l,c}$ increases, the shrinkage of relative prototype decreases or remain the same. There exist different ways of kernel mean shrinkage estimation that can be realized by constructing  $h(\gamma_{i,c}, \lambda_{l,c})$ differently, such as Tikhonov regularization and Truncated SVD \cite{muandet2014kernel}. In this work we apply the Tikhonov regularization as the filter function that

\begin{equation}\label{Tikhonov}
\begin{split}
h(\gamma_{i,c}, \lambda_{l,c}) = \frac{1}{\gamma_{i,c} + \lambda_{l,c}}.
\end{split}
\end{equation}

In other ways, however, the kernel relative-prototype shrinkage estimation $\bm{\mu}_{l,c}(\lambda_{l,c})$ in Eq. \ref{SRProto} is not a computable form as the mapping function $\phi_{\omega}$ is not or in some cases can not be known, e. g., $\phi_{\omega}: \mathbb{R}^d \to \mathbb{R}^\infty$. Thus, we propose a computable form of $\bm{\mu}_{l,c}(\lambda_{l,c})$ based on the work by Muandet et al. \cite{muandet2014kernel}. 

\newtheorem{myDef}{Definition}
\newtheorem{myTheo}{Theorem}

\begin{myTheo}\label{The2}
	Denote the $n\times n$ metrix $\bm{K}_{ss}^c$ whose entry at the row $i$ and the column $j$ ($\forall i, j$)  can be expressed the kernel form $k(f_\theta(\bm{s}_{i,c}),f_\theta(\bm{s}_{j,c})) = \phi_{\omega}(f_\theta(\bm{s_{i,c}}))^T\phi_{\omega}(f_\theta(\bm{s_{i,c}}))$, and the $n\times n$ metrix $\bm{K}_{qs}^{l,c}$ whose entry at the row $i$ and the column $j$ ($\forall i, j$) can be expressed the kernel form $k(f_\theta(\bm{s}_{i,c}),f_\theta(\bm{q}_{l,c}))= \phi_{\omega}(f_\theta(\bm{s}_{i,c}))^T\phi_{\omega}(f_\theta(\bm{q}_{l,c}))$. The kernel kean shrinkage estimation of $\bm{\mu}_{l,c}$ in Eq. \ref{SRProto} can be expressed as:
	\begin{equation}\label{SRProto_e}
	\begin{split}
	\bm{\mu}_{l,c}(\lambda_{l,c}) = \bm{\mu}_{l,c} - \sum_{i=1}^{n}\alpha_{l,i,c}(\lambda_{l,c})\bm{r}_{i,c},
	\end{split}
	\end{equation}
	with
	\begin{equation}\label{alpha}
	\begin{split}
	\bm{\alpha}_{l,c}(\lambda_{l,c})= g^h(\bm{\tilde{K}}_{ss}^c, \lambda_{l,c})\bm{\tilde{K}}_{qs}^{l,c} \bm{I}_{n},
	\end{split}
	\end{equation}
	\begin{equation}\label{s_ss}
	\bm{\tilde{K}}_{ss}^c = \bm{K}_{ss}^c - \bm{\hat{I}}_{n}\bm{K}_{ss}^c - \bm{K}_{ss}^c\bm{\hat{I}}_{n} + \bm{\hat{I}}_{n}\bm{K}_{ss}^c\bm{\hat{I}}_{n}, 
	\end{equation}
	\begin{equation}\label{s_qs}
	\bm{\tilde{K}}_{qs}^{l,c} = \bm{K}_{qs}^{l,c} - \bm{\hat{I}}_{n}\bm{K}_{qs}^{l,c} - \bm{K}_{ss}^c + \bm{\hat{I}}_{n}\bm{K}_{ss}^c,
	\end{equation}
	where $\bm{\alpha}_{l,c}(\lambda_{l,c}) = [\alpha_{l,1,c}(\lambda_{l,c}),...,\alpha_{l,n,c}(\lambda_{l,c})]^T$ and $\bm{I}_{n} = [1/n,1/n,...,1/n]^T$, and $\{\bm{\hat{I}}_{n}\}_{i,j} = 1/n$. Suppose the eigen-decomposition that $\bm{\tilde{K}}_{ss}^c = \bm{V \Psi V^T}$, then
	
	\begin{equation}\label{RK}
    	g\bm{(\tilde{K}}_{ss}^{c}, \lambda_{l,c}) =  \bm{V} g^h\left(\bm{\Psi},\lambda_{l,c}\right) \bm{V^T},
    \end{equation}
where $g(\bm{\Psi},\lambda_{l,c}) = {\rm diag}(h(\gamma_{1,c}, \lambda_{l,c}),...,h(\gamma_{n,c}, \lambda_{l,c}))$ with the $\bm{\tilde{K}}_{ss}^{c}$'s eigenvalues $\gamma_1,\gamma_2,...,\gamma_n$.
\end{myTheo}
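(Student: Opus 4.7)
The plan is to reduce Eq.~\ref{SRProto}, which lives in a possibly infinite-dimensional RKHS, to the finite-dimensional expression of Eq.~\ref{SRProto_e} by invoking the kernel-PCA identification between the spectrum of the covariance operator $C = \sum_{i=1}^n \bm{r}_{i,c}\otimes\bm{r}_{i,c}$ and that of the centered Gram matrix $\bm{\tilde{K}}_{ss}^c$.

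First, I would verify by direct expansion of inner products, using $\bm{\mu}_c = \tfrac{1}{n}\sum_k \phi_{\omega}(f_\theta(\bm{s}_{k,c}))$, that the four-term formula in Eq.~\ref{s_ss} recovers $[\bm{\tilde{K}}_{ss}^c]_{ij} = \langle \bm{r}_{i,c},\bm{r}_{j,c}\rangle$, and that Eq.~\ref{s_qs} recovers $[\bm{\tilde{K}}_{qs}^{l,c}]_{ij} = \langle \bm{r}_{i,c},\bm{m}_{l,j,c}\rangle$. Combined with Eq.~\ref{hatss}, this shows that the $i$-th component of the vector $\bm{\tilde{K}}_{qs}^{l,c}\bm{I}_n$ is exactly $\langle \bm{r}_{i,c},\bm{\mu}_{l,c}\rangle$, which is the only inner-product quantity that will ultimately be needed.

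Next, for any eigenpair $(\gamma_{i,c},\bm{w}_{i,c})$ of $C$ with $\gamma_{i,c}>0$, the identity $C\bm{w}_{i,c} = \gamma_{i,c}\bm{w}_{i,c}$ forces $\bm{w}_{i,c}\in\mathrm{span}\{\bm{r}_{j,c}\}$. Writing $\bm{w}_{i,c} = \gamma_{i,c}^{-1/2}\sum_j V_{j,i}\bm{r}_{j,c}$ and testing against each $\bm{r}_{k,c}$ shows that the coefficient vector $\bm{V}_{\cdot,i}$ is a unit eigenvector of $\bm{\tilde{K}}_{ss}^c$ with the same eigenvalue $\gamma_{i,c}$, yielding the decomposition $\bm{\tilde{K}}_{ss}^c = \bm{V}\bm{\Psi}\bm{V}^T$ claimed in the theorem; the $\sqrt{\gamma_{i,c}}$ scaling is precisely what makes $\bm{w}_{i,c}$ unit norm. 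Eigenpairs with $\gamma_{i,c}=0$ contribute zero to Eq.~\ref{SRProto} and can be discarded.

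Substituting these representations into Eq.~\ref{SRProto}, the coefficient of $\bm{w}_{i,c}$ becomes $h(\gamma_{i,c},\lambda_{l,c})\gamma_{i,c}\langle\bm{\mu}_{l,c},\bm{w}_{i,c}\rangle = h(\gamma_{i,c},\lambda_{l,c})\gamma_{i,c}^{1/2}\bm{V}_{\cdot,i}^T(\bm{\tilde{K}}_{qs}^{l,c}\bm{I}_n)$, and multiplying by $\bm{w}_{i,c} = \gamma_{i,c}^{-1/2}\sum_j V_{j,i}\bm{r}_{j,c}$ cancels the surviving $\sqrt{\gamma_{i,c}}$. Rearranging the double sum gives $\sum_j \bm{r}_{j,c}\bigl(\sum_i h(\gamma_{i,c},\lambda_{l,c}) V_{j,i}\bm{V}_{\cdot,i}^T\bigr)(\bm{\tilde{K}}_{qs}^{l,c}\bm{I}_n)$, and the inner sum is exactly the $j$-th row of $\bm{V}\,g(\bm{\Psi},\lambda_{l,c})\bm{V}^T = g(\bm{\tilde{K}}_{ss}^c,\lambda_{l,c})$ via Eq.~\ref{RK}; matching this against Eq.~\ref{alpha} completes the identification. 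The main obstacle is the careful bookkeeping of the two $\sqrt{\gamma_{i,c}}$ normalizations against the extra $\gamma_{i,c}$ factor that appears inside Eq.~\ref{SRProto}: once those cancel cleanly, turning the spectral sum into a matrix function of $\bm{\tilde{K}}_{ss}^c$ is routine.
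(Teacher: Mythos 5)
Your proposal is correct and follows essentially the same route as the paper's proof: represent each eigenvector of the covariance operator as $\bm{w}_{i,c}=\gamma_{i,c}^{-1/2}\sum_j v_{i,j}\bm{r}_{j,c}$ via the kernel-PCA duality with the centered Gram matrix, substitute into Eq.~\ref{SRProto} so that the factor $\gamma_{i,c}$ cancels against the two normalizations, and collapse the resulting double sum into $g(\bm{\tilde{K}}_{ss}^c,\lambda_{l,c})\bm{\tilde{K}}_{qs}^{l,c}\bm{I}_n$. You are in fact somewhat more careful than the paper, which simply cites Sch\"olkopf for the eigenvector representation and does not explicitly verify the centering identities or address the zero-eigenvalue case.
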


\begin{proof}\label{pf2}
	Suppose that $v_{i,j}$ is the entry at the row $i$ and column $j$ of $\bm{V}$. According to \cite{scholkopf1998nonlinear}, we have $\textbf{w}_{i,c} = (1/\sqrt{\gamma_i})\sum_j^{n}v_{i,j}\bm{r}_{j,c}$,
	
	\begin{equation}\label{proof1}
		\begin{split}
			\bm{\mu}_{l,c}(\lambda_{l,c}) &= \bm{\mu}_{l,c} - \sum_{i=1}^{n} h(\gamma_{i,c}, \lambda_{l,c})\gamma_{i,c} \langle \bm{\mu}_{l,c}, \textbf{w}_{i,c} \rangle \textbf{w}_{i,c}\\
			&= \bm{\mu}_{l,c} -\sum_{j=1}^{n}\sum_{i=1}^{n} v_{i,j}h(\gamma_{i,c}, \lambda_{l,c}) \langle \bm{\mu}_{l,c}, \sum_k^{n}v_{i,k}\bm{r}_{k,c} \rangle \bm{r}_{j,c}\\
			&= \bm{\mu}_{l,c} - \sum_{j=1}^{n} \alpha_{l,j,c}(\lambda_{l,c}) \bm{r}_{j,c}.\\
		\end{split}
	\end{equation}
	where
	
	\begin{equation}\label{proof2}
		\begin{split}
			\alpha_{l,j,c}(\lambda_{l,c}) &= \sum_{i=1}^{n} v_{i,j}h(\gamma_{i,c}, \lambda_{l,c})\langle \bm{\mu}_{l,c}, \sum_{k=1}^{n}v_{i,k}\bm{r}_{k,c} \rangle\\
			& = \sum_{i=1}^{n} v_{i,j}h(\gamma_{i,c}, \lambda_{l,c}) \sum_k^{n}v_{i,k}\langle \bm{\mu}_{l,c}, \bm{r}_{k,c}. \rangle\\
		\end{split}
	\end{equation}
	Suppose that $\bm{v_{i}} = [v_{i,1}, v_{i,2},...,v_{i,n}]^T$,
	
	\begin{equation}\label{logits}
		\begin{split}
			\bm{\alpha}_{l,c}(\lambda_{l,c}) & = \sum_{i=1}^{n} \bm{v_{i}}h(\gamma_{i,c}, \lambda_{l,c})\sum_k^{n}v_{i,k}\langle \bm{\mu}_{l,c}, \bm{r}_{k,c} \rangle\\
			& = \sum_{i=1}^{n} \bm{v_{i}}h(\gamma_{i,c}, \lambda_{l,c})\bm{v_{i}}^T\bm{\tilde{K}}_{qs}^{l,c}\bm{I}_{n}\\
			& = g(\bm{\tilde{K}}_{ss}^c)\bm{\tilde{K}}_{qs}^{l,c} \bm{I}_{n} .
		\end{split}
	\end{equation}
	
\end{proof}

Based on Theorem \ref{The2}, we can calculate the similarity between each sample in query set and prototype of each class using $\bm{\mu}_{l,c}(\lambda_{l,c})$. 

\subsection{Shrinkage Base Classifiers}

For the convenience of calculation, we suppose that all the shrinkage parameters are the same, or $\lambda_{l,c} = \lambda$. The similarity of sample-pairs can be measured using the square of distance between the relative-prototype and original point in RKHS

\begin{equation}\label{dist}
\begin{split}
d_{l,c}^2 \left(\lambda,\bm{S}_{c}, \bm{q}_{l}\right)&= \displaystyle\left\Vert\bm{\mu}_{l,c}\left(\lambda\right)\right\Vert^2  \\      
&\displaystyle =\left(\bm{\alpha}_{l,c}\left(\lambda\right)\right)^T \bm{\tilde{K}}_{ss}^{c}       \bm{\alpha}_{l,c}\left(\lambda\right) + \bm{I}_{n}^T \bm{\tilde{K}}_{qq}^{l,c}\bm{I}_{n} - 2\left(\bm{\alpha}_{l,c}\left(\lambda\right)\right)^T \bm{\tilde{K}}_{qs}^{l,c} \bm{I}_{n}
\end{split}
\end{equation}
where $\bm{\tilde{K}}_{qq}^{l,c}$ can be written as

\begin{equation}\label{s_qq}
\bm{\tilde{K}}_{qq}^{l,c} = \bm{K}_{qq}^{l,c}+\bm{K}_{ss}^{c} - \bm{K}_{qs}^{l,c} - \left(\bm{K}_{qs}^{l,c}\right)^T ,
\end{equation}
with the $n\times n$ metrix $\bm{K}_{qq}^{l,c}$ whose entry at the row $i$ and the column $j$ ($\forall i, j$) can be expressed the kernel form $k(f_\theta(\bm{q}_{l}),f_\theta(\bm{q}_{l})) = \phi_{\omega}(f_\theta(\bm{q}_{l}))^T\phi_{\omega}(f_\theta(\bm{q}_{l}))$.The probability of the sample $\bm{q}_l$ in query set belonging to class $c$ is

\begin{equation}\label{prob}
P_{\omega,\theta}\left(Y=c|\bm{q}_l\right) = \displaystyle \frac{\exp\left(-\zeta d_{l,c}^2\left(\lambda,\bm{S}_{c}, \bm{q}_{l}\right)\right)}{\sum_{c=1}^C\exp\left(-\zeta d_{l,c}^2\left(\lambda,\bm{S}_{c}, \bm{q}_{l}\right)\right)},
\end{equation}
where $\zeta$ is the metric scaling parameter. The loss function of DSFN is

\begin{equation}\label{Loss}
\begin{split}
\mathcal{L}(\omega,\theta) =& -\frac{1}{m} \sum_{l=1}^{m}\log P_{\omega,\theta}\left(y_l|\bm{q}_l\right) 
\end{split},
\end{equation}
where $y_l$ is the label of $\bm{q}_l$. The few-shot learning process with the proposed DSFN is shown as Algorithm 1.

\begin{table}[!htbp]
\label{Alg}
\centering
\begin{tabular}{l}
\toprule[1pt]
\textbf{Algorithm 1} Few-shot learning with the proposed DSFN \\
\hline
\textbf{Input}: Support set $\textbf{S}$ and query set $\textbf{Q}$, learning rate $\alpha$. \\
\textbf{Output}: $\theta$\\
1:\,\,\,  Initialize $\theta$ randomly;\\
2:\,\,\,  \textbf{for} $t=1$ to $T$ \textbf{do} \\
3:\,\,\,\,\,\,\,\,   Generate episode by randomly sampling $\textbf{S}^{(t)}$ from $\textbf{S}$ and $\textbf{Q}^{(t)}$ from $\textbf{Q}$;\\
4:\,\,\,\,\,\,\,\,  \textbf{for} $c=1$ to $C$ \textbf{do} \\
5:\,\,\,\,\,\,\,\,\,\,\,\,\,\,\,\, \textbf{for} $l=1$ to $m$ \textbf{do} \\
6:\,\,\,\,\,\,\,\,\,\,\,\,\,\,\,\,\,\,\,\,\,\,\,  Compute $\bm{\tilde{K}}_{ss}^{c}$, $\bm{\tilde{K}}_{qs}^{l,c}$ and $\bm{\tilde{K}}_{qq}^{l,c}$ using Eq. \ref{s_ss}, Eq. \ref{s_qs} and Eq. \ref{s_qq},\\ 
\,\,\,\,\,\,\,\,\,\,\,\,\,\,\,\,\,\,\,\,\,\,\,\,\,\,\,  respectively, where $\bm{K}_{ss}^{c}$, $\bm{K}_{qs}^{l,c}$ and $\bm{K}_{qq}^{l,c}$ are calculated using the samples \\
\,\,\,\,\,\,\,\,\,\,\,\,\,\,\,\,\,\,\,\,\,\,\,\,\,\,\, in $\textbf{S}^{(t)}$ and $\textbf{Q}^{(t)}$;\\
7: \,\,\,\,\,\,\,  \,\,\,\,\,\,\,\,\,\,\, Compute  $\bm{\alpha}_{l,c}(\lambda_{l,c})$ with Eq. \ref{alpha},using $\bm{\tilde{K}}_{qs}^{l,c}$ and eigenvalue decomposition \\
\,\,\,\,\,\,\,\,\,\,\,\,\,\,\,\,\,\,\,\,\,\,\,\,\,\,\, of $\bm{\tilde{K}}_{ss}^{c}$;\\
8: \,\,\,\,\,\,\,  \,\,\,\,\,\,\,\,\,\,\, Compute  $d_{l,c}\left(\lambda^{(t)},\bm{S}^{(t)}_{c}, \bm{q}_{l}^{(t)}\right)$ using Eq. \ref{dist};\\
9:\,\,\,  \,\,\,\, \,\,\,\,\,\,\, \textbf{end for}\\
10:\,\,\,  \,\,\,  \textbf{end for}\\
11:\,\,\,  \,\,\,  Compute the loss function using Eq. \ref{Loss}; \\
12:\,\,\,  \,\,\, Update $\theta$ with $\theta - \alpha\nabla_\theta \mathcal{L} (\omega,\theta)$.\\
13:\,\,\,  \textbf{end for}\\

\bottomrule[1pt]
\end{tabular}
\end{table}

\subsection{Relationship to Other Methods}
Here we discuss the connection of our class representation to the point type (e. g., ProtoNet) and subspace type (e. g., DSN) representations. In fact, they are different mainly because they use different filter functions.

\textbf{Relationship to ProtoNet.} As the filter function $h(\gamma_{i,c}, \lambda_{l,c}) = 0$ instead of Tikhonov regularization that causes the disappearance of shrinkage effect, and the map function $\phi_{\omega}$ is identical, the proposed framework (Eq. \ref{SRProto}) is embodied as ProtoNet.

\textbf{Relationship to DSN.} While using the Truncated SVD as the filter function that $h(\gamma_{i,c}, \lambda_{l,c}) = \mathbb{I}_{(\gamma_{i,c} \geq \lambda_{l,c})}\gamma_{i,c}^{-1}$ instead of Tikhonov regularization, where $\mathbb{I}_{(\gamma_{i,c} \geq \lambda_{l,c})}\gamma_{i,c}^{-1}$ is the indicative function that is 1 if $\gamma_{i,c} \geq \lambda_{l,c}$ else 0, and the map function $\phi_{\omega}$ is identical, the proposed framework (Eq. \ref{SRProto}) is embodied as DSN. In this case, by setting different values of $\lambda_{l,c}$,  different dimension of subspace can be selected. Formally, the relationship of DSFN and DSN is shown in Theorem 2 (see detailed proof in supplementary material).

\begin{myTheo}\label{The1}
Suppose that: 1) $h(\gamma_{i,c}, \lambda_{l,c}) = \mathbb{I}_{(\gamma_{i,c} \geq \lambda_{l,c})}\gamma_{i,c}^{-1}$; 2) $\zeta = 1$; 3) $\lambda_{l,c} = constant$ for all $l$ and $c$; 4) the map function $\phi_{\omega}$ is identical. Eq. \ref{dist} is reduced to $d_{l,c}^2 \left(\lambda,\bm{S}_{c}, \bm{q}_{l}\right) = \Vert(I - P_cP_c^T)(f_\theta\left(\bm{q}_l\right) - \bm{\mu}_{c})\Vert^2$ with $P_c$ the truncated
matrix of $W_c$, where $W_c$ is the eigenvector matrix of empirical covariance matrix $C$. 	
\end{myTheo}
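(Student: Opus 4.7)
The plan is to work directly from the definition of the kernel shrinkage relative-prototype in Eq.~\ref{SRProto} rather than from the expanded distance in Eq.~\ref{dist}, since under the stated assumptions Eq.~\ref{SRProto} collapses into an orthogonal projection in plain Euclidean space. First I would invoke assumption~(4): because $\phi_{\omega}$ is the identity, the feature map drops out, so $\bm{r}_{i,c} = f_{\theta}(\bm{s}_{i,c}) - \bm{\mu}_{c}$, $\bm{\mu}_{l,c} = f_{\theta}(\bm{q}_{l}) - \bm{\mu}_{c}$, and the eigenvectors $\bm{w}_{i,c}$ are the standard eigenvectors of the empirical covariance matrix $C$ in the ambient Euclidean space. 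This is exactly the $W_{c}$ that appears in DSN.

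Next I would plug assumption~(1) into Eq.~\ref{SRProto}. The crucial simplification is that the product in front of the rank-one term becomes
\begin{equation*}
h(\gamma_{i,c},\lambda_{l,c})\,\gamma_{i,c} \;=\; \mathbb{I}_{(\gamma_{i,c}\geq \lambda_{l,c})}\gamma_{i,c}^{-1}\cdot \gamma_{i,c} \;=\; \mathbb{I}_{(\gamma_{i,c}\geq \lambda_{l,c})},
\end{equation*}
so the shrinkage term in Eq.~\ref{SRProto} reduces to $\sum_{i:\gamma_{i,c}\geq\lambda_{l,c}}\langle \bm{\mu}_{l,c},\bm{w}_{i,c}\rangle\,\bm{w}_{i,c}$. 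With assumption~(3) the threshold $\lambda_{l,c}$ is a fixed constant across $l,c$, so the set of retained eigenvectors depends only on $c$; collecting those retained $\bm{w}_{i,c}$ as columns gives precisely the truncated matrix $P_{c}$ of DSN, and the sum is $P_{c}P_{c}^{T}\bm{\mu}_{l,c}$.

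Putting the pieces together yields $\bm{\mu}_{l,c}(\lambda) = (I - P_{c}P_{c}^{T})(f_{\theta}(\bm{q}_{l})-\bm{\mu}_{c})$. Taking the squared Euclidean norm reproduces the DSN distance in Eq.~\ref{DSN_m}. Assumption~(2) ($\zeta=1$) is not strictly needed for the identity of distances but removes the metric-scaling discrepancy when comparing the full classifiers Eq.~\ref{prob}, so I would note it at the end. Since $\lambda_{l,c}$ is free, any truncation rank used by DSN is recovered by choosing $\lambda_{l,c}$ between two consecutive eigenvalues.

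I do not expect any serious obstacle. The only thing that deserves care is the bookkeeping between the indicator-based truncation (keep eigenvectors with $\gamma_{i,c}\geq\lambda_{l,c}$) and the ``truncated eigenvector matrix'' $P_{c}$ of DSN, which is normally described as the top-$k$ eigenvectors; a one-line remark that the threshold $\lambda_{l,c}$ and the rank $k$ are interchangeable parametrizations of the same subspace closes this gap. Everything else is direct substitution and the standard identification of a sum of rank-one outer products with an orthogonal projector onto the span of an orthonormal set.
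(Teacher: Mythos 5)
Your proposal is correct and follows the natural direct route: substituting $h(\gamma_{i,c},\lambda_{l,c})\gamma_{i,c}=\mathbb{I}_{(\gamma_{i,c}\geq\lambda_{l,c})}$ into Eq.~\ref{SRProto} turns the shrinkage term into the orthogonal projector $P_cP_c^T$ onto the retained (orthonormal) eigenvectors of $C$, and with $\phi_\omega$ the identity the squared norm of $\bm{\mu}_{l,c}(\lambda)=(I-P_cP_c^T)(f_\theta(\bm{q}_l)-\bm{\mu}_c)$ is exactly the DSN distance; starting from Eq.~\ref{SRProto} rather than the kernel-expanded Eq.~\ref{dist} is legitimate because Theorem~\ref{The2} (trivially, for the identity map) identifies the two. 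Your side remarks -- that $\zeta=1$ is only needed to match the full classifier in Eq.~\ref{prob}, and that the threshold $\lambda$ and the truncation rank $k$ are interchangeable parametrizations of $P_c$ -- correctly close the only bookkeeping gaps.
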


Theorem \ref{The1} implies that, while using Truncated SVD as the filter function, the loss function of our proposed framework (Eq. \ref{prob}) can be reduced to the loss of DSN with no regularization (See Eq. 5 in the work by Simon et al.\cite{simon2020adaptive}).

\section{Experiments Setup}
\subsection{Datasets}
\textbf{\emph{mini}ImageNet}. \emph{mini}ImageNet dataset \cite{vinyals2016matching} was often used for few-shot learning, which contains a total of 60,000 color images in 100 classes randomly selected from ILSVRC-2012, with 600 samples in each class. The size of each image is 84 $\times$ 84. In the data set, the training set, validation set and test set contains the number of classes with 64 : 16 : 20. 

\textbf{\emph{tiered}-ImageNet}. The \emph{tiered}-ImageNet dataset \cite{ren2018meta} is a benchmark image dataset that is also selected from ILSVRC-2012 but contains 608 classes that is more than that in \emph{mini}ImageNet dataset. These classes are divided into 34 high-level categories, can each category contains 10 to 30 classes.  The size of each image is 84 $\times$ 84. Further, the categories are divided into the training set, validation set and test set with 20 : 6 : 8.

\textbf{CIFAR-FS}. The CIFAR-FS dataset \cite{bertinetto2018meta} is a few-shot learning benchmark containing all 100 classes from CIFAR-100 \cite{krizhevsky2009learning}, and each class contains 600 samples. The size of each image is 32 $\times$ 32. The classes are divided into the training set, validation set and test set with 64 : 16 : 20. 


\subsection{Implementation}
In training stage, 15-shot 10-query samples are chosen on \textit{mini}ImageNet dataset; 10-shot 15-query samples are chosen on \textit{tiered}ImageNet dataset;  2-shot 20-query samples are chosen for 1-shot task and 15-shot 10-query samples are chosen for 5-shot task on  CIFAR-FS dataset. $\lambda$ is set to the best by choosing 0.01,0.1,1,10 or 100. For these datasets, the setting of 8 episodes per batch is utilized in the experiments. The total number of training epochs is 80, and in each epoch 1000 batches are sampled. In testing stage,  1000 episodes are used to assess our model. For the 1-shot K-way learning, another support sample was created by flipping the original support sample, and two support samples are used for spectral filtering in the validation and testing stages. Our model is trained and tested in the PyTorch machine learning package \cite{paszke2017automatic}.

Two backbones, the Conv-4 and Resnet-12, are utilized as the backbones in our model. For Conv-4, the Adam optimizer with default momentum values ([$\beta_1$, $\beta_2$] = [0.9, 0.999]) is applied for the training. The learning rate is initially set as 0.005 then decayed to 0.0025, 0.00125,  0.0005 and 0.00025 at 8, 30, 45 and 50 epochs, respectively. For ResNet-12,  the SGD optimizer is applied for the training, and the learning rate is initially set as 0.1 then decayed to 0.0025, 0.00032, 0.00014 and 0.000052 at 12, 30, 45 and 57 epochs, respectively.

The identity kernel $k^{ide}(\bm{z}_i,\bm{z}_j) = \langle \bm{z}_i, \bm{z}_j\rangle$ and the RBF kernel $k^{rbf}(\bm{z}_i,\bm{z}_j) = \exp \left( - \| \bm{z}_i - \bm{z}_j \|^2/ (2\sigma^2) \right)$ are chosen as the kernel functions, where $\sigma^2$ is assigned as the dimension of embeddings. In addition, the scaling parameter $\zeta$ is learned as a variable. For the filter function, we set the shrinkage coefficient $\lambda$ as the fixed multiple of the maximum eigenvalue with each class.

\section{Experiments and Discussions}

\subsection{Comparison with State-of-the-art Methods}

\textbf{Results on \textit{mini}ImageNet dataset.} Firstly, the proposed DSFN and the state-of-the art methods for 5-way classification tasks on \textit{mini}ImageNet dataset are compared in Table \ref{mini}. Table \ref{mini} shows that the proposed DSFN with identity kernel can achieve the bests on 5-way 5-shot classification tasks using both Conv-4 and ResNet-12 backbones, which are higher than DSN with 3.3$\%$ and $1.3\%$, respectively. The RBF kernel can achieve the second best on 5-way 5-shot classification tasks.  These results illustrate that the proposed DSFN can achieve the state-of-the-art performance for 5-way 5-shot classification tasks on the dataset.

\textbf{Results on \textit{tiered}-ImageNet dataset.}   We compare the proposed DSFN with the state-of-the-art methods for 5-way classification tasks on \textit{tiered}ImageNet dataset, as shown in Table \ref{Tiered}. It can be seen that, the performance of DSFN with identity kernel and RBF kernel is slightly lower than DSN on 5-way 5-shot classification task. However, they are better than the others on 5-way 5-shot classification task. 

\textbf{Results on CIFAR-FS dataset.} A further comparison is made on CIFAR-FS dataset, as shown in Table \ref{CIFAR-FS}. Table \ref{CIFAR-FS} shows that the proposed DSFN with RBF kernel performs the best on 5-way 5-shot classification task, whose test accuracy is about 1.2$\%$ and 2.8$\%$ higher than those of DSN and ProtoNet, respectively. Thus, the proposed DSFN performs the state-of-the-art for 5-way 5-shot classification task on the dataset.

\begin{table}[h]

\centering
\begin{threeparttable}
\caption{Test accuracies ($\%$) from the proposed DSFN and the state-of-the art methods for 5-way  tasks on \textit{mini}ImageNet dataset with 95$\%$ confidence intervals. $\ddag$ means that training set and validation set are used for training the corresponding model.}
\begin{tabular}{ccccc}
\toprule[1pt]
\textbf{} 
\multirow{2}{*}{\bf{Model}} & \,\,\,\,\,\,\,\,\,\,\,\,\multirow{2}{*}{\bf{Backbone}}\,\,\,\,\,\,\,\,\,\,\,\, &\multicolumn{2}{c}{\bf{5-way}}\\
\cmidrule(r){3-4}
& &\bf{1-shot} &\,\,\,\,\,\,\,\,\,\,\,\,\,\,\,\,\bf{5-shot}\,\,\,\,\,\,\,\,\,\,\,\,\,\,\,\, \\
\hline
MatchingNet\cite{vinyals2016matching}         &Conv-4         &$43.56 \pm 0.84$   &$55.31 \pm 0.73$   \\			
MAML \cite{finn2017model-agnostic}            &Conv-4         &$48.70 \pm 1.84$   &$63.11 \pm 0.92$   \\
Reptile \cite{nichol2018first}                &Conv-4         &$49.97 \pm 0.32$   &$65.99 \pm 0.58$   \\
ProtoNet\cite{snell2017prototypical}          &Conv-4         &$44.53 \pm 0.76$   &$65.77 \pm 0.66$   \\
Relation Nets \cite{2018Learning}             &Conv-4         &$50.44 \pm 0.82$   &$65.32 \pm 0.70$   \\
DSN\cite{simon2020adaptive}                   &Conv-4         &$51.78 \pm 0.96$   &$68.99 \pm 0.69$   \\
\hline	
$\textbf{DSFN(identity kernel)}$             &Conv-4          &$\bf{50.21 \pm 0.64}$    &$\bf{72.20 \pm 0.51 }$    \\
$\textbf{DSFN(RBF kernel)}$                  &Conv-4          &$\bf{49.97 \pm 0.63}$    &$\bf{72.04 \pm 0.51}$    \\
\hline 
SNAIL\cite{mishra2018a}                       &ResNet-12         &$55.71 \pm 0.99$   &$68.88 \pm 0.92$   \\  
TADAM\cite{oreshkin2018tadam:}                &ResNet-12         &$58.50 \pm 0.30$   &$76.70 \pm 0.30$   \\
AdaResNet\cite{munkhdalai2018rapid}           &ResNet-12         &$56.88 \pm 0.62$   &$71.94 \pm 0.57$   \\
LEO$\ddag$\cite{rusu2018meta}                 &WRN-28-10         &$61.76 \pm 0.08$   &$77.59 \pm 0.12$   \\
LwoF\cite{gidaris2018dynamic}                 &WRN-28-10         &$60.06 \pm 0.14$   &$76.39 \pm 0.11$   \\
wDAE-GNN$\ddag$\cite{gidaris2019generating}   &WRN-28-10         &$62.96 \pm 0.15$   &$78.85 \pm 0.10$   \\
MetaOptNet-SVM\cite{lee2019meta-learning}      &ResNet-12     &$62.64 \pm 0.61$   &$78.63 \pm 0.46$   \\	
DSN\cite{simon2020adaptive}                   &ResNet-12         &$62.64 \pm 0.66$   &$78.83 \pm 0.45$   \\
CTM\cite{li2019finding}                       &ResNet-18         &$62.05 \pm 0.55$   &$78.63 \pm 0.06$   \\
Baseline\cite{chen2019closer}                 &ResNet-18         &$51.75 \pm 0.80$   &$74.27 \pm 0.63$   \\
Baseline++\cite{chen2019closer}               &ResNet-18         &$51.87 \pm 0.77$   &$75.68 \pm 0.63$   \\
Hyper ProtoNet\cite{khrulkov2020hyperbolic}   &ResNet-18         &$59.47 \pm 0.20$   &$76.84 \pm 0.14$   \\
Hyperbolic RBF kernel\cite{fang2021kernel}    &ResNet-18         &$60.91 \pm 0.21$   &$77.12 \pm 0.15$   \\
\hline
$\textbf{DSFN(identity kernel)}$             &ResNet-12         &$\bf{61.27 \pm 0.71}$   &$\bf{80.13 \pm 0.17}$   \\
$\textbf{DSFN(RBF kernel)}$                  &ResNet-12         &$\bf{59.43 \pm 0.66}$   &$\bf{79.60 \pm 0.46}$  \\
\bottomrule[1pt]
\end{tabular}\label{mini}
\end{threeparttable}
\end{table}

\begin{table}[H]
\centering
\begin{threeparttable}
\caption{Test accuracies ($\%$) from the proposed DSFN and the state-of-the art methods for 5-way tasks on \textit{tiered}-ImageNet dataset with 95$\%$ confidence intervals. $\ddag$ means that training set and validation set are used for training the corresponding model. }
\begin{tabular}{ccccc}
\toprule[1pt]
\textbf{} 
\multirow{2}{*}{\bf{Model}} &\,\,\,\,\,\,\,\,\,\,\,\, \multirow{2}{*}{\bf{Backbone}}\,\,\,\,\,\,\,\,\,\,\,\, &\multicolumn{2}{c}{\bf{5-way}}\\
\cmidrule(r){3-4}
& &\bf{1-shot}  &\,\,\,\,\,\,\,\,\,\,\,\,\,\,\,\,\bf{5-shot}\,\,\,\,\,\,\,\,\,\,\,\,\,\,\,\, \\
\hline
ProtoNet\cite{snell2017prototypical}           &ResNet-12         &$61.74 \pm 0.77$   &$80.00 \pm 0.55$   \\
CTM\cite{li2019finding}                        &ResNet-18         &$64.78 \pm 0.11$   &$81.05 \pm 0.52$   \\
LEO$\ddag$\cite{rusu2018meta}                  &WRN-28-10         &$66.33 \pm 0.05$   &$81.44 \pm 0.09$   \\
MetaOptNet-RR\cite{lee2019meta-learning}      &ResNet-12         &$65.36 \pm 0.71$   &$81.34 \pm 0.52$   \\
MetaOptNet-SVM\cite{lee2019meta-learning}      &ResNet-12         &$65.99 \pm 0.72$   &$81.56 \pm 0.53$   \\
DSN \cite{simon2020adaptive}                   &ResNet-12         &$66.22 \pm 0.75$   &$82.79 \pm 0.48$   \\
\hline
$\textbf{DSFN(identity kernel)}$               &ResNet-12         &$\bf{65.46 \pm 0.70}$   &$\bf{82.41 \pm 0.53}$   \\
$\textbf{DSFN(RBF kernel)}$                    &ResNet-12         &$\bf{64.27 \pm 0.70}$   &$\bf{82.26 \pm 0.52}$  \\
\bottomrule[1pt]
\end{tabular}\label{Tiered}
\end{threeparttable}
\end{table} 

\begin{table}[h]
\centering
\begin{threeparttable}
\caption{Accuracies ($\%$) from the proposed DSFN and some state-of-the art methods for 5-way classification tasks on CIFAR-FS dataset with 95$\%$ confidence intervals. }
\begin{tabular}{ccccc}
\toprule[1pt]
\textbf{} 
\multirow{2}{*}{\bf{Model}} &\,\,\,\,\,\,\,\,\,\,\,\, \multirow{2}{*}{\bf{Backbone}}\,\,\,\,\,\,\,\,\,\,\,\, &\multicolumn{2}{c}{\bf{5-way}}\\
\cmidrule(r){3-4}
& &\bf{1-shot}  &\,\,\,\,\,\,\,\,\,\,\,\,\,\,\,\,\bf{5-shot}\,\,\,\,\,\,\,\,\,\,\,\,\,\,\,\, \\
\hline
ProtoNet\cite{snell2017prototypical}          &ResNet-12       &$72.2 \pm 0.7$      &$83.5 \pm 0.5$  \\
MetaOptNet-RR\cite{lee2019meta-learning}      &ResNet-12       &$72.6 \pm 0.7$      &$84.3 \pm 0.5$   \\
MetaOptNet-SVM\cite{lee2019meta-learning}     &ResNet-12       &$72.0 \pm 0.7$      &$84.2 \pm 0.5$   \\	
DSN\cite{simon2020adaptive}                   &ResNet-12       &$72.3 \pm 0.7$      &$85.1 \pm 0.5$\\
\hline
$\textbf{DSFN(identity kernel)}$              &ResNet-12       &$\bf{70.62 \pm 0.79}$ &$\bf{86.11 \pm 0.58}$   \\
$\textbf{DSFN(RBF kernel)}$                   &ResNet-12       &$\bf{71.28 \pm 0.70}$ &$\bf{86.30 \pm 0.58}$  \\
\bottomrule[1pt] 
\end{tabular}\label{CIFAR-FS}
\end{threeparttable} 
\end{table}
\subsection{Ablation Study}
\textbf{The impact of shrinkage parameter.} The influence of different values of shrinkage parameter $\lambda$ on the performances of the proposed DSFN is shown in Fig. \ref{Ablation0}. Fig. \ref{Ablation0} shows that the general trends drop for these datasets as the shrinkage parameter increases from 0.01 to 100, and the descending trends with 5 shot is more obvious that those with 1 shot. These results indicate that smaller shrinkage parameters (e.g., 1, 0.1, 0.01) or stronger shrinkage effect can better improve the performance of the proposed model.

\begin{figure}[h]
	\centering
	{\includegraphics[width=0.85\textwidth]{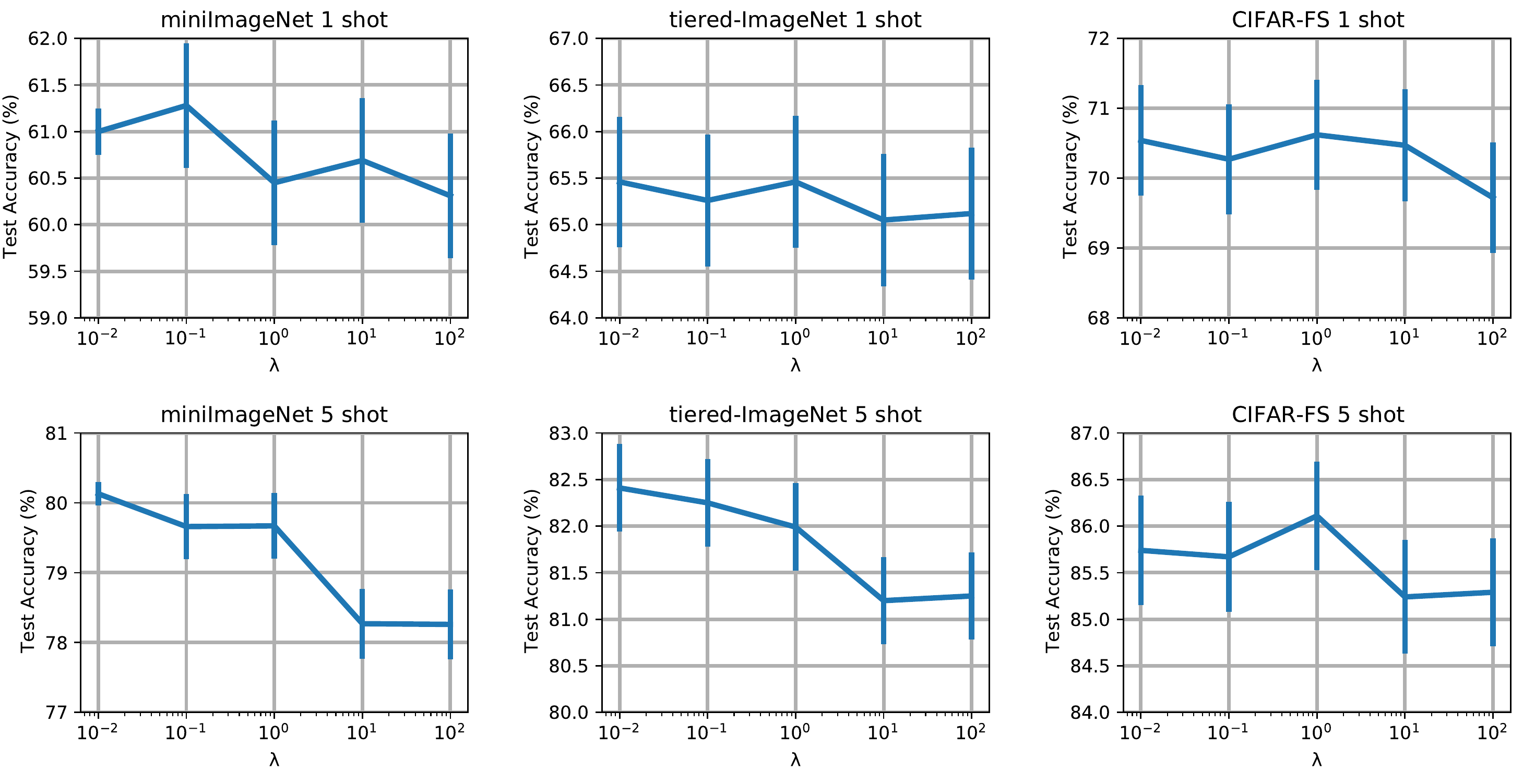}}
	\caption[]{Test accuracies on few-shot classification tasks from the proposed DSFN against different values of shrinkage parameter, where ResNet-12 is used. }
	\label{Ablation0}
\end{figure}
\renewcommand{\arraystretch}{1}

\setlength\tabcolsep{3pt}
\begin{table}[h]
\centering
\begin{threeparttable}
\caption{Accuracies ($\%$) from models with and without shrinkge for 5-way 1-shot and 5-way 5-shot classification tasks, w S: with shrinkge, w/o S: without shrinkge.}
\begin{tabular}{cccccc}
\toprule[1pt]
\textbf{} 
\bf{Dataset}&\,\,\,\,\,\,\,\,\bf{Kernel}\,\,\,\,\,\,\,\,    &\bf{w S} &\,\,\,\,\,\,\,\,\bf{w/o S}\,\,\,\,\,\,\,\, &{\bf{1-shot}}&\,\,\,\,\,\,\,\,{\bf{5-shot}}\,\,\,\,\,\,\,\, \\
\hline
\multirow{4}{*}{\bf{\textit{mini}ImageNet}}&identity&&\checkmark    &$60.14 \pm 0.67$      &$77.65 \pm 0.52$   \\
&identity&\checkmark&                                               &$61.00 \pm 0.25$      &$80.13 \pm 0.17$   \\
&RBF&&\checkmark                                                    &$58.74 \pm 0.65$      &$79.18 \pm 0.46$ \\
&RBF&\checkmark&                                                    &$59.43 \pm 0.66$      &$79.60 \pm 0.46$   \\
\hline
\multirow{4}{*}{\bf{\textit{tiered}-ImageNet}}&identity&&\checkmark  &$65.05 \pm 0.72$  &$81.14 \pm 0.55$   \\
&identity&\checkmark&                                                &$65.46 \pm 0.70$     &$82.41 \pm 0.53$   \\
&RBF&&\checkmark                                                     &$64.23 \pm 0.70$     &$82.07 \pm 0.53$   \\
&RBF&\checkmark&                                                     &$64.27 \pm 0.70$     &$82.26 \pm 0.52$   \\
\hline
\multirow{4}{*}{\bf{CIFAR-FS}}&identity&&\checkmark               &$70.54 \pm 0.82$     &$85.30 \pm 0.59$   \\
&identity&\checkmark&                                             &$70.62 \pm 0.79$     &$86.11 \pm 0.58$   \\
&RBF&&\checkmark                                                  &$71.18 \pm 0.73$     &$86.09 \pm 0.47$   \\
&RBF&\checkmark&                                                  &$71.28 \pm 0.70$     &$86.30 \pm 0.46$   \\
\bottomrule[1pt]
\end{tabular}\label{Ablation}
\end{threeparttable}
\end{table}

\textbf{The effectiveness of shrinkage.} We show an ablation study to illustrate the effectiveness of shrinkage in our work, as shown in Table \ref{Ablation}. In this experiment, the performances of identity kernel and RBF kernel with and without shrinkage in few-shot classification tasks are compared. Table \ref{Ablation} shows that, the proposed model with shrinkage performs better than those without shrinkage for both kernels. In addition, the improvement of shrinkage on 5-way 5-shot classification tasks is more obvious than those on 5-way 1-shot classification tasks, probably because the eigenvalues and eigenvectors with one shot is hard to learn. 

\subsection{Time Complexity}
Our proposed DSFN approach has the time complexity of $\mathcal{O}(\max(CN^3,CN^2D))$, where $C$, $N$, $D$ are the number of way, shot and feature dimensionality, respectively. The proposed DSFN approach is slower than DSN ($\mathcal{O}(\min(CND^2,CN^2D))$) and ProtoNet ($\mathcal{O}(CND)$) due to the kernel matrix calculation, eigen-decomposition and multiple matrix multiplication. A way to reduce the time complexity is using some efficient algorithms, such as the fast adaptive eigenvalue decomposition~\cite{chonavel2003fast} and faster matrix multiplication \cite{alman2021refined}.\\

\section{Conclusion}

In this work, we propose a framework called DSFN for few-shot learning. In this framework, one can represent the similarity between a query and a prototype as the distance after spectral filtering of support set in each class in RKHS. DSFN is an extension of some mainstream methods, e. g., ProtoNet and DSN, and with appropriate filter function, the framework of DSFN can be embodies as those methods. In addition, we also showed that in this framework, one can explore new methods by applying diverse forms, e. g., Tikhonov regularization, as the filter function, and diverse forms of kernels. Several experiments verified the effectiveness of various specific forms of the proposed DSFN. Future works should take a closer look at the selection of filter function and the role of shrinkage parameter in the proposed framework.


\bibliographystyle{splncs04}
\bibliography{egbib}
\end{document}